\definecolor{red}{rgb}{0.8, 0.2, 0.2}
\definecolor{blue}{rgb}{0.2, 0.2, 1.0}
\definecolor{darkred}{rgb}{0.6, 0.25, 0.25}
\newcommand{\LS}[1]{{\color{blue}[LS: #1]}}
\newcommand{\mb}{\mathbf}
\begin{document}

\title{Efficient Online Relative Comparison Kernel Learning}
\author{Eric Heim \thanks{University of Pittsburgh, Department. of Computer Science \newline \{eric, milos\}$@$cs.pitt.edu}
\and
Matthew Berger \thanks{Air Force Research Laboratory, Information Directorate \newline \{matthew.berger.1, lee.seversky\}$@$us.af.mil}
\and
Lee M. Seversky \footnotemark[2]
\and
Milos Hauskrecht \footnotemark[1]
}
\date{}
\maketitle
%

\maketitle

\begin{abstract}
Learning a kernel matrix from relative comparison human feedback is an important problem with applications in collaborative filtering, object retrieval, and search.  For learning a kernel over a large number of objects, existing methods face significant scalability issues inhibiting the application of these methods to settings where a kernel is learned in an online and timely fashion. In this paper we propose a novel framework called \textbf{E}fficient online \textbf{R}elative comparison \textbf{K}ernel \textbf{LE}arning (ERKLE), for efficiently learning the similarity of a large set of objects in an online manner. We learn a kernel from relative comparisons via stochastic gradient descent, one query response at a time, by taking advantage of the sparse and low-rank properties of the gradient to efficiently restrict the kernel to lie in the space of positive semidefinite matrices.  In addition, we derive a passive-aggressive online update for minimally satisfying new relative comparisons as to not disrupt the influence of previously obtained comparisons. Experimentally, we demonstrate a considerable improvement in speed while obtaining improved or comparable accuracy compared to current methods in the online learning setting.

\end{abstract}
\section*{Keywords}
Online Learning, Kernel Learning, Relative Comparisons.

\section{Introduction}

Learning a similarity model over a set of objects from human feedback is important to many applications in collaborative filtering, document and multimedia retrieval, and visualization.  
It has been shown that by incorporating human feedback, the overall performance of such applications can be greatly improved \cite{koren2011ordrec,joachims2002optimizing,kovashka2012whittlesearch,levy2009music,zudilova2008trends}. In this work we focus on learning a similarity model from human feedback through relative comparisons.  More specifically, we focus on the \emph{relative comparison kernel learning} (RCKL) problem, in which the goal is to learn a positive semidefinite (PSD) kernel matrix from relative comparisons given by humans.  Kernels are used for modeling object relationships in many learning techniques~\cite{scholkopf2002learning}, and hence are applicable to many methods that utilize kernels for these applications.

In learning a kernel from human supervision, it is important to obtain feedback which is intuitive for the user to provide and informative for a learning algorithm to use. For instance, naive forms of supervision such as numerical judgments between pairs of objects have been shown to be very noisy~\cite{stewart2005absolute}. A \emph{relative comparison}, the response to a query of the form ``\textit{Is object A more similar to object B or C?}'', is well known as an intuitive mechanism for soliciting human feedback and an effective way of learning similarity~\cite{kendall1948rank}.
Recent works addressing fine-grained categorization~\cite{wah2014cvpr} and perceptual visualization design~\cite{demiralp2014infovis} have shown the practicality and benefit of learning kernels from relative comparisons.

Many RCKL methods \cite{agarwal2007generalized,van2012stochastic} learn a kernel by solving a semidefinite program (SDP) in batch, where all obtained relative comparisons are required to learn the kernel.  However, in many practical applications, a batch approach is not appropriate due to the online and dynamic nature of the application.  For example in crowdsourcing, it is often of interest to minimize the number of dispatched tasks and thus the cost of the crowd by leveraging active learning techniques~\cite{tamuz2011adaptively,jamieson2011low} to adaptively select the most informative relative comparison query. The success of these techniques depends on maintaining an up to date model so as to ensure the most informative query is selected, as well as an efficient learning method to quickly update the model so that no crowd participant is idle.  Likewise, recommendation systems for online marketplaces obtain continuous feedback in the form of click-through data via user interaction. In order for the learned kernel to be up to date and reflect the latest user feedback, the learning method must be able to quickly incorporate feedback as it is received.

These scenarios motivate the need for an \emph{efficient} and \emph{online} method for learning from large-scale relative comparison data. Batch methods poorly scale for large object collections primarily because they must ensure their solutions are PSD. Without any prior assumptions on the data this operation is of $O(n^3)$ time complexity for $n$ objects, which for large $n$ is prohibitively slow for the aforementioned applications.

This work introduces a novel online RCKL framework called \textbf{E}fficient online \textbf{R}elative comparison \textbf{K}ernel \textbf{LE}arning (ERKLE) that sequentially updates a kernel one query response at a time in $O(n^2)$ complexity. ERKLE employs stochastic gradient descent~\cite{bottou1998online} for RCKL, taking advantage of the sparse and low-rank structure of the RCKL gradient over a single comparison to devise fast updates that only require finding the smallest eigenvector and eigenvalue of a suitable matrix.
We show that the gradient structure, which enables such an efficient update, generalizes several well-known convex RCKL methods~\cite{agarwal2007generalized,van2012stochastic}.  The structure of the gradient also reveals a simple way to bound the smallest eigenvalue after each gradient step, which often allows updates to be performed in constant time. Motivated by work in online learning \cite{crammer2006online}, we also derive a passive-aggressive version of ERKLE to ensure learned kernels model the most recently obtained relative comparisons without over-fitting.  In summary, our main contributions are:



\begin{enumerate} [noitemsep]
     \item An online RCKL framework for large-scale similarity learning that generalizes many current RCKL methods.
     \item An efficient kernel update method with $O(n^2)$ time complexity that exploits the unique structure of RCKL stochastic gradients when stochastic gradient steps may result in a non-PSD matrix.
     \item A passive-aggressive update procedure for online relative comparison kernel learning
     \item An experimental evaluation that shows ERKLE has both improved performance and faster run times compared to batch RCKL methods.
\end{enumerate}



\section{Related Work}

The problem of learning a kernel matrix, driven by relative comparison feedback, has been the focus of much recent work. Most recent techniques primarily differ by the choice of loss function. For instance, Generalized Non-metric Multidimensional Scaling~\cite{agarwal2007generalized} employs hinge loss, Crowd Kernel Learning~\cite{tamuz2011adaptively} uses a scale-invariant loss, and Stochastic Triplet Embedding~\cite{van2012stochastic} uses a logistic loss function.

The aforementioned RCKL methods can be viewed as solving a kernelized special case of the classic non-metric multidimensional scaling problem \cite{kruskal1964nonmetric}, where the goal is to find an embedding of objects in $\mathbb{R}^d$ such that they satisfy given Euclidean distance constraints.  In contrast to many of the kernel-learning formulations, their analogous embedding-learning counterparts are non-convex optimization problems, which only guarantee convergence to a local minimum.  In the typical non-convex batch setting, multiple solutions are found with different initializations and the best is chosen among them.  This strategy is poorly suited for the online setting where triplets are being observed sequentially, and which solution is best may change as feedback is received.  



In this work we consider the \emph{online} RCKL problem, where one is sequentially acquiring relative comparisons among a large collection of objects.  Stochastic gradient descent techniques \cite{robbins1951stochastic} are a popular class of methods for online learning of high-dimensional data for a very general class of functions, where recent techniques~\cite{xiao2009dual,NIPS2012_4633} have demonstrated competitive performance with batch techniques.  In particular, recent methods~\cite{hazan2012projection,mahdavi2012stochastic} have developed efficient methods to solve SDPs in an online fashion.  The work of~\cite{chen2014efficient} shows how to devise efficient update schemes for solving SDPs when the gradient of the objective function is low-rank.
We build upon and improve the efficiency of this work, by taking advantage of the sparse and low-rank structure of the gradient common in convex RCKL formulations.  

Our passive-aggressive step size procedure is similar to that which is introduced in \cite{crammer2006online} for other online learning problems.  In their work, the authors create a passive-aggressive online update rule for classic SVM formulations used in problems such as binary/multi-class classification and regression.  In deriving such an update for different RCKL loss functions, we relate how different methods can be utilized under a common passive-aggressive framework.  To our knowledge, such an update for RCKL problems and the associated analysis of RCKL methods has not been done.

\section{Preliminaries}

In this section, we formally define RCKL and provide a brief overview of RCKL methods.  Let $S^n_+$ be the set of $n \times n$ PSD matrices, and $\mathbf{M}^{ab}$ be the entry at row $a$ column $b$ of a matrix $\mathbf{M}$.  The goal of RCKL is to learn a PSD kernel matrix $\mathbf{K} \in S^n_+$ over $n$ objects, given a set $\mathcal{T}$ of triplets:
 \begin{equation}
   \mathcal{T} = \{(a,b,c)\ |\ a \mathrm{\ is\ more\ similar\ to\ }b \mathrm{\ than\ }c\}
   \label{eq:TripletsDef}
 \end{equation}

\noindent such that squared distance constraints are satisfied:
\begin{equation}
   \begin{array}{rl}
       \forall_{\left(a,b,c\right) \in \mathcal{T}} : & d^2_{\mathbf{K}}(a,b) < d^2_{\mathbf{K}}(a,c)\\ [5 pt]
      \mathrm{where} & d^2_{\mathbf{K}}(a,b) = \mathbf{K}^{aa} + \mathbf{K}^{bb} - 2\mathbf{K}^{ab}.
   \end{array}
   \label{eq:KConstraints}
\end{equation}

\noindent We say a kernel $\mb{K}$ satisfies a triplet $t_i = \left(a_i,b_i,c_i\right) \in \mathcal{T}$ if the constraint in~\eqref{eq:KConstraints} corresponding to $t_i$ is satisfied.  

In this work, we consider triplets that are answers to relative comparison queries posed to one or more people.  We define a query $q$ to have three components, a ``head'' object $h$ to be compared with two objects $o^1$ and $o^2$.  A query $q = \left(h,\left\{o^1,o^2\right\}\right)$ can be answered by either the triplet $\left(h,o^1,o^2\right)$ or $\left(h,o^2,o^1\right)$, indicating that $h$ is more similar to $o^1$ than $o^2$ or $h$ is more similar to $o^2$ than $o^1$, respectively.   It is desirable to learn a kernel that not only satisfies observed triplets, but also that generalizes to unseen triplets, leading to a learned kernel that models a more complete notion of the desired human similarity space.


\subsection{RCKL Formulation} Many RCKL methods can be generalized by the following SDP:
 \begin{equation}
   \begin{array}{rl}
     \displaystyle \min_{\mathbf{K}} & \displaystyle L\left(\mathbf{K}, \mathcal{T}) + \tau\mathrm{Trace}(\mathbf{K}\right)\\ [5 pt]
     \mathrm{s.t.} & \mathbf{K} \succeq 0.
   \end{array}
   \label{eq:RCKL}
 \end{equation}

\noindent The objective function is composed of two terms.  The first term is a function $L$ measuring how much loss $\mathbf{K}$ incurs for not satisfying triplets in $\mathcal{T}$.  The second term is a trace regularization on $\mathbf{K}$ weighted by a hyperparameter $\tau$.  Trace regularization is used as a convex approximation of the non-convex rank function.  Higher values of $\tau$ enforce that \eqref{eq:RCKL} produces lower-complexity similarity models.  Finally, $\mathbf{K}$ is constrained to be PSD.

The loss function in the objective can be decomposed into the sum of losses over individual triplets:
\begin{equation}
  \displaystyle L\left(\mathbf{K}, \mathcal{T}\right) = \sum_{t \in \mathcal{T}}l\left(\mathbf{K},t\right).
  \label{eq:RCKLLoss}
\end{equation}
\noindent Existing RCKL methods differ in the choice of the loss function $l$.  The Stochastic Triplet Embedding (STE) approach of  \cite{van2012stochastic}  defines $l\left(\mathbf{K},t\right) = -\log p_{t}^{\mathbf{K}}$ as the loss function, where $p_{t}^{\mathbf{K}}$ is the probability that a triplet is satisfied:
\begin{equation}
   p_{t=\left(a,b,c\right)}^{\mathbf{K}} = \frac{\mathrm{exp}(-d^2_{\mathbf{K}}(a,b))}{\mathrm{exp}(-d^2_{\mathbf{K}}(a,b)) + \mathrm{exp}(-d^2_{\mathbf{K}}(a,c))}.
   \label{eq:STE}
\end{equation}
\noindent Generalized Nonmetric Multidimensional Scaling (GNMDS) \cite{agarwal2007generalized} uses a hinge loss, where $l\left(\mathbf{K},t=\left(a,b,c\right)\right)$ is defined as:
\begin{equation}
     \displaystyle \max(0, d^2_{\mathbf{K}}(a,b) - d^2_{\mathbf{K}}(a,c) + 1).
\label{eq:GNMDS}
\end{equation}
%

For either loss function $l$, \eqref{eq:RCKL} is a convex optimization problem and the globally optimal solution is found by performing projected gradient descent, which consists of two update steps.
The first step is a simple descending step along the gradient of the objective:
\begin{equation}
\mathbf{K}_i' = \mathbf{K}_{i-1} - \delta_i\left(\nabla L\left(\mathbf{K}_{i-1}, \mathcal{T}\right) + \tau \mathbf{I}\right),
\label{eq:batchStep}
\end{equation}
where $i$ denotes the current iteration, $\delta_i$ is the learning rate.
The second step  projects the result of the first gradient step onto the PSD cone:
\begin{equation}
\mathbf{K}_i = \Pi_{S_+}\left(\mathbf{K}_i'\right).
\end{equation}
These steps are iterated until convergence.



\section{Efficient Online Relative Comparison Kernel Learning (ERKLE)}

The main computational bottleneck of traditional RCKL methods is the projection onto the PSD cone, $\Pi_{S_+}$.  This projection is commonly found by first taking the eigendecomposition of $\mb{K}_i' = \mb{V} \mb{\Lambda} \mb{V}^T$  and setting all negative eigenvalues to $0$, i.e. $\mb{K}_i = \mb{V} [\mb{\Lambda}]_+ \mb{V}^t$, where $[\cdot]_+$ is defined entry-wise as $[\mb{\Lambda}^{ii}]_+ = \max(0,\mb{\Lambda}^{ii})$.  Absent of any prior knowledge on the structure of $\mb{K}_i'$, its full eigendecomposition is necessary for the projection.  Since this is an $O(n^3)$ operation, the projection step renders batch methods computationally prohibitive for learning the similarity of a large number of objects in an online manner.


\subsection{Stochastic Gradient Step}
To create an efficient and online framework for RCKL -- ERKLE -- we leverage stochastic gradient descent techniques \cite{bottou1998online}.  As shown in \eqref{eq:RCKLLoss}, the loss function $L$ naturally decomposes into the sum over losses $l$ defined on individual observations (triplets in our case).  From this decomposition, ERKLE first performs the following stochastic gradient step:
\begin{equation}
  \mathbf{K}_j' \gets \mathbf{K}_{j-1} - \delta_j\nabla l\left(\mathbf{K}_{j-1},t_j\right),
  \label{eq:stoUp}
\end{equation}
\noindent where triplets $t_1,...,t_{j-1}$ have been observed, $\mathbf{K}_{j-1}$ is the online solution after observing the $j-1$ triplet,

Performing a stochastic optimization gives ERKLE an advantage over current RCKL methods that perform batch optimizations.  Batch methods attempt to minimize a loss function over a training set.  This is known to minimize empirical risk with respect to the particular training samples, which is used as an estimate of expected risk over the ground truth distribution over all samples.  Obtaining triplets in an online fashion from a source can be viewed as sampling triplets from a ground truth distribution at random. As such, taking stochastic steps over samples directly minimizes expected risk with respect to the ground truth distribution of triplets, not empirical risk with respect to the training instances.  The practical impact of this characteristic is that stochastic methods tend to generalize better to unobserved samples. For more discussion on this characteristic of stochastic methods see \cite{bottou1998online}. 

Note that our online formulation does not include trace regularization.
Although this may impact our method in generalizing to unseen triplets, our online formulation achieves good generalization through carefully constructed, data-dependent step sizes $\delta_j$, as detailed in Section~\ref{subsec-pa}.

\subsection{Efficient Projection}
In order to retain positive semi-definiteness, after taking a stochastic gradient step the resulting matrix $\mb{K}_j'$ must be projected onto the PSD cone.
Following the procedure of $\Pi_{S_+}$ is prohibitively expensive for our online setting. Instead, for RCKL methods we can take advantage of the sparse and low-rank nature of the gradient to devise an efficient projection scheme.
To this end, we introduce a \emph{canonical gradient matrix} $\mb{G}$ over a triplet $t = \left(a,b,c)\right)$, where the entries are defined as:
\begin{equation}
\label{eq:G}
\mathbf{G}^{i j} = \begin{cases}
   -2 & \text{if } i = a, j = b \text{ or } i = b, j = a\\
   2  & \text{if } i = a, j = c \text{ or } i = c, j = a\\
   1  & \text{if } i = b, j = b\\
   -1 &  \text{if } i = c, j = c\\
   0 & \text{otherwise}.
  \end{cases}
\end{equation}
\noindent Now consider the following choice for the stochastic step:
\begin{equation}
    \nabla l\left(\mathbf{K},t \right) =  f\left(\mathbf{K},t\right) \mathbf{G},
    \label{eq:genUp}
\end{equation}

\noindent where $f$ is a real-valued function defined below.  With \eqref{eq:genUp} as the gradient in \eqref{eq:stoUp}, $\mb{K}_{j-1}$ is updated by increasing entries corresponding to the similarity between objects $a$ and $b$ and decreasing the similarity between $a$ and $c$ by a factor of $f(\mathbf{K}_{j-1},t_j)$. 

The function $f$ can be defined such that we recover the gradients of $l$ for different convex RCKL formulations.  The stochastic gradient for STE can be obtained by defining $f$ as:
\begin{equation}
  f\left(\mathbf{K},t\right) = 1 - p^{\mathbf{K}}_t
\end{equation}
\noindent Similarly by defining $f$ to be:
\begin{equation}
    f\left(\mathbf{K},t\right) = \left\{
  \begin{array}{rl}
    1 & \text{if }  d^2_{\mathbf{K}}(a,b) + 1 < d^2_{\mathbf{K}}(a,c)\\
    0 & \text{otherwise} 
    \label{eq:GNMDS}
  \end{array}
\right.
\end{equation}
\noindent  the stochastic gradient for GNMDS is obtained.  Note, that this not only generalizes these two methods for use in our online framework but also suggests a simple way to create new online RCKL methods by designing a function $f$ that weighs the contribution of individual triplets.


Decomposing the online updates in such a way reveals a key insight into how to perform efficient projections onto the PSD cone after the stochastic step.  Algorithm \ref{alg:ERKLE-Proj} outlines the procedure for efficient projection in ERKLE.  Here, $\lambda_{\downarrow}$ and $\mathbf{v}_{\downarrow}$ are the smallest eigenvalue and eigenvector of matrix $\mb{K}$, respectively.  This procedure has a time complexity $O(n^2)$ due to finding $\lambda_{\downarrow}$ and $\mathbf{v}_{\downarrow}$.  To show that Algorithm \ref{alg:ERKLE-Proj} does indeed perform a projection onto the PSD cone, we prove the following theorem:
\begin{theorem}
\label{thm:proj}
Algorithm \ref{alg:ERKLE-Proj} results in a PSD matrix $\mathbf{K}_j$ that is closest to $\mathbf{K}_{j}'$ in terms of Frobenius distance.
\end{theorem}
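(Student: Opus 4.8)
The plan is to reduce the statement to the standard characterization of the Frobenius projection onto the PSD cone, and then to show that in our setting this projection only ever needs to touch a single eigenvalue. First I would recall the classical fact: for a symmetric matrix $\mathbf{M}$ with eigendecomposition $\mathbf{M} = \mathbf{V}\mathbf{\Lambda}\mathbf{V}^T$, the unique minimizer of $\|\mathbf{X} - \mathbf{M}\|_F$ over $\mathbf{X} \succeq 0$ is $\mathbf{V}[\mathbf{\Lambda}]_+\mathbf{V}^T$, i.e. the negative eigenvalues are set to $0$ while eigenvectors are left untouched. Granting this, the entire content of the theorem reduces to a single question: how many negative eigenvalues can $\mathbf{K}_j'$ possess?

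The heart of the argument is an inertia bound on $\mathbf{K}_j' = \mathbf{K}_{j-1} - \delta_j f(\mathbf{K}_{j-1},t_j)\,\mathbf{G}$. The key structural observation is that the canonical gradient matrix $\mathbf{G}$ from \eqref{eq:G} is supported entirely on the $3\times 3$ principal submatrix indexed by $\{a,b,c\}$, so I would compute the spectrum of that block directly. Its trace is $0$, and a short determinant computation yields the characteristic polynomial $-\lambda(\lambda-3)(\lambda+3)$; hence $\mathbf{G}$ has exactly one positive eigenvalue ($+3$), one negative eigenvalue ($-3$), and $n-2$ zeros. Since $\delta_j \ge 0$ and $f \ge 0$ for both the STE and GNMDS choices of $f$, the rank-two perturbation $-\delta_j f\,\mathbf{G}$ has at most one negative eigenvalue.

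I would then invoke subadditivity of the negative inertia, a consequence of Weyl's inequality: the number of negative eigenvalues of a sum of symmetric matrices is at most the sum of the numbers of negative eigenvalues of the summands. Applying this with $\mathbf{K}_{j-1} \succeq 0$ (which contributes zero negative eigenvalues, being the PSD output of the previous iteration, with the base case $\mathbf{K}_0 = \mathbf{I}$) and the perturbation $-\delta_j f\,\mathbf{G}$ (which contributes at most one), I conclude that $\mathbf{K}_j'$ has at most one negative eigenvalue, namely $\lambda_{\downarrow}$. Consequently $[\mathbf{\Lambda}]_+$ differs from $\mathbf{\Lambda}$ in at most the single entry $\lambda_{\downarrow}$, and the exact projection collapses to the rank-one correction $\mathbf{K}_j = \mathbf{K}_j' - \min(0,\lambda_{\downarrow})\,\mathbf{v}_{\downarrow}\mathbf{v}_{\downarrow}^T$, which is precisely what Algorithm~\ref{alg:ERKLE-Proj} computes. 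By the characterization in the first step, this is the Frobenius-closest PSD matrix to $\mathbf{K}_j'$.

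The main obstacle I anticipate is the inertia step rather than the eigenvalue computation, which is routine. One must be careful to count negative eigenvalues of the actual sum $\mathbf{K}_{j-1} + (-\delta_j f\,\mathbf{G})$ rather than merely bound a rank, and to secure the inductive invariant that $\mathbf{K}_{j-1}$ is genuinely PSD entering the step (guaranteed because every iteration outputs a PSD matrix). The conceptual payoff is entirely here: establishing that \emph{at most one} negative eigenvalue can ever appear is exactly what licenses replacing the full $O(n^3)$ eigendecomposition in $\Pi_{S_+}$ with the single smallest eigenpair.
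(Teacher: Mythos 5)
Your proposal is correct and follows essentially the same route as the paper's proof: induction on the PSD invariant with base case $\mathbf{K}_0 = \mathbf{I}$, the observation that $-\gamma_j\mathbf{G}$ has nonzero eigenvalues only $\pm 3\gamma_j$, Weyl's inequality to conclude $\mathbf{K}_j'$ has at most one negative eigenvalue, and the resulting collapse of the projection to a rank-one correction. The only difference is that where the paper closes the argument by citing Case 2 of Theorem 4 in \cite{chen2014efficient}, you derive the same conclusion from the standard eigendecomposition characterization of the Frobenius projection onto the PSD cone (and you additionally verify the spectrum of $\mathbf{G}$ via its characteristic polynomial, which the paper merely asserts), making your version self-contained.
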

\begin{proof}
Let $\mathbf{K}_0 \in S^n_+$ (i.e. identity).  We use this as our base case and show inductively that after each iteration of the main loop, $\mathbf{K}_j$ remains PSD. Let $\gamma_j = \delta_jf\left(\mathbf{K}_{j-1},t_j\right)$ be the \emph{magnitude} of an update.  By \eqref{eq:genUp}, the update in Equation \eqref{eq:stoUp} can be written as $\mathbf{K}_{j-1} - \gamma_j\mathbf{G}$. The only nonzero eigenvalues of $-\gamma_j\mathbf{G}$ are $\lambda_1 = 3\gamma_j$ and $\lambda_2 = -3\gamma_j$.  It follows from Weyl's inequality that the matrix $\mb{K}_j' = \mb{K}_{j-1} - \gamma_j\mb{G}$ has \emph{at most} one negative eigenvalue.  If $\mathbf{K}_j'$ has no negative eigenvalues, then it is PSD (line 6 of Algorithm \eqref{alg:ERKLE-Proj}).  If $\mathbf{K}_j'$ has one negative eigenvalue, line 4 of Algorithm \ref{alg:ERKLE-Proj} results in a PSD matrix $\mathbf{K}_j$ that is closest to $\mathbf{K}_j'$ in terms of Frobenius distance by Case 2 of Theorem 4 in \cite{chen2014efficient}.
\end{proof}

The important implication of Thm. \ref{thm:proj} is that ERKLE can incorporate a triplet into a kernel in $O(n^2)$ time by performing the efficient projection outlined in Algorithm 
\ref{alg:ERKLE-Proj}.  Furthermore, if a step is sufficiently small, then no projection is needed at all.  Let $\lambda_j^0$ be the smallest eigenvalue of $\mathbf{K}_j$.  By Weyl's inequality, if 
$\lambda_j^0 - 3\gamma_j \geq 0$, then all eigenvalues of $\mathbf{K}_{j+1}'$ are greater than or equal to 0.  This can be used to skip the projection step when the update is known to result in a PSD 
matrix.
 In our 
algorithm, we lower bound the smallest eigenvalue by maintaining a conservative estimate $\hat{\lambda}_j^0$.  Initially, $\hat{\lambda}_0^0 \gets \lambda_0^0$. It is updated each iteration with it's 
lower bound $\hat{\lambda}_j^0 \gets \hat{\lambda}_{j-1}^0 - 3\gamma_j$.  If $\hat{\lambda}_j^0 < 0$, then Alg. \ref{alg:ERKLE-Proj} is used to project onto the PSD cone and $\hat{\lambda}_j^0 \gets 
\mathrm{max}\left(0,\lambda_{\downarrow}\right)$.  Otherwise, no projection is performed.  In the case where $\lambda_0^0 >> -3\gamma_j$, this simple lower-bounding procedure can save many 
eigenvalue/eigenvector computations until a projection may be necessary.
%
\begin{algorithm}[t]
  \center \caption{Efficient PSD Projection}
  \label{alg:ERKLE-Proj}
  \begin{algorithmic}[1]
     \Procedure{$\Pi_+^1$}{$\mathbf{K}$}
        \State Find $\lambda_{\downarrow}$ and $\mathbf{v}_{\downarrow}$ from $\mathbf{K}$
        \If{$\lambda_{\downarrow} < 0$}
            \State \textbf{return} $\mathbf{K} - \lambda_{\downarrow} \mathbf{v}_{\downarrow}\mathbf{v}_{\downarrow}^T$
        \Else
            \State \textbf{return} $\mathbf{K}$ 
        \EndIf
     \EndProcedure
  \end{algorithmic}
\end{algorithm}

\subsection{Passive-Aggressive Updates}
\label{subsec-pa}

A key difference between the batch and stochastic RCKL updates is the magnitude of the updates.  For both methods the magnitude of the updates with respect to a single triplet $t$ is a function of a 
learning rate and how well the previous solution satisfies $t$.  In the previous section we denoted the magnitude of an ERKLE update as $\gamma_j$.  In the batch setting, the same learning rate $\delta_i$ is used for all triplets in a given step.  In contrast, stochastic methods 
typically use different learning rates $\delta_j$ for different triplets $t_j$, which can result in faster convergence rates.
To take advantage of faster convergence, the learning rates must satisfy certain conditions.
Early work \cite{bottou1998online} on the topic of learning rates suggest that $\delta_j$ should satisfy two constraints: $\sum_{j=1}^{\infty}\delta_j^2 < \infty$ and $\sum_{j=1}^{\infty}\delta_j = 
\infty$.  For example $\delta_j = 1/j$ satisfies these constraints.  Later work \cite{moulines2011non} suggests a more aggressive setting of $\delta_j = 1/\sqrt{j}$.

However, in the online setting there is no reason to believe that a triplet should have less influence on the kernel than those obtained before it.  On the other hand, we do not wish to over-fit to the most recently obtained triplets.  It is this observation that motivates Passive-Aggressive (PA) Online Learning \cite{crammer2006online}.  In the RCKL setting, the general idea is that if the previous solution $\mathbf{K}_{j-1}$ satisfies a newly obtained triplet $t_{j} = \left(a,b,c\right)$ by a margin of 1, then do not update the kernel (passive).  Otherwise, update the kernel so that the kernel is changed the minimal amount, but $t_{j}$ is satisfied by a margin of 1 (aggressive).   A fortunate side effect of choosing minimally sized updates is that updates are less likely to result in non-PSD matrices than larger steps, thus potentially reducing the number of projections onto the PSD cone via our conservative eigenvalue estimate (Section 4.2).

To derive a passive-aggressive update for ERKLE, we wish to learn a magnitude of a stochastic step $\gamma_j = \delta_jf(\mathbf{K}_{j-1},t_j)$ with passive-aggressive properties. $f$ as defined by GNMDS in \eqref{eq:GNMDS} is inherently passive, but if $\mathbf{K}_{j-1}$ does not satisfy the margin constraint, it takes a step independent of how close the previous solution is to satisfying $t_j$.  As such, we wish to find a $\delta_j$ that takes an aggressive step.  We do this by solving the following optimization problem:
 \begin{equation}
   \begin{array}{rl}
     \displaystyle \min_{\delta_j} & \displaystyle \delta_j^2\\ [5 pt]
     \mathrm{s.t.} & d^2_{\mathbf{K}_j'}(a,b) + 1 \leq d^2_{\mathbf{K}_j'}(a,c), \delta_j \geq 0
   \end{array}
   \label{eq:paUpOpt}
 \end{equation}
\noindent By \eqref{eq:genUp} and \eqref{eq:GNMDS}, the first constraint can be rewritten as:
\begin{equation}
 d^2_{\mathbf{K}_{j-1}}(a,b) - d^2_{\mathbf{K}_{j-1}}(a,c) - 10\delta_j + 1 \leq 0
 \label{eq:PAConst}
\end{equation}
\noindent With the assumption that the triplet is not satisfied by a margin of one in $\mathbf{K}_{j-1}$, no update is required; otherwise, only a positive value of $\delta_j$ can 
satisfy \eqref{eq:PAConst}, making the positive constraint on $\delta_j$ redundant.  Also, the smallest $\delta_j$ that satisfies \eqref{eq:PAConst} is the one that makes the left hand side 
exactly zero.  As a result, the inequality constraint can be handled as equality.  To find the optimum we first write the Lagrangian $\mathcal{L}\left(\delta_j, \alpha\right)$:
\begin{equation}
 \delta_j^2 + \alpha\left(d^2_{\mathbf{K}_{j-1}}(a,b) - d^2_{\mathbf{K}_{j-1}}(a,c) - 10\delta_j + 1\right)
 \label{eq:PALag}
\end{equation}
\noindent Taking the partial derivative of \eqref{eq:PALag} with respect to $\delta_j$, setting it to 0, and solving for $\delta_j$ results in $\delta_j = 5\alpha$.  Substituting this back into \eqref{eq:PALag} makes the Lagrangian:
\begin{equation}
 -25\alpha^2 + \alpha\left(d^2_{\mathbf{K}_{j-1}}(a,b) - d^2_{\mathbf{K}_{j-1}}(a,c) +1\right)
 \label{eq:PALagAlpha}
\end{equation}
\noindent Taking the partial derivative of \eqref{eq:PALagAlpha} with respect to $\alpha$, setting it to 0, solving for $\alpha$ and then substituting this back into $\delta_j = 5\alpha$ results in 
the minimum step size that satisfies the margin constraint:
\begin{equation}
 \delta_j = \frac{d^2_{\mathbf{K}_{j-1}}(a,b) - d^2_{\mathbf{K}_{j-1}}(a,c) +1}{10}
 \label{eq:PAStep}
\end{equation}
%
%

A similar passive-aggressive update can be derived using the probability of a triplet being satisfied in STE.  Consider the following optimization:
 \begin{equation}
   \begin{array}{rl}
     \displaystyle \min_{\delta_j} & \displaystyle \delta_j^2\\ [5 pt]
     \mathrm{s.t.} & p_{t_j}^{\mathbf{K}_j'} \geq P, \delta_j \geq 0
   \end{array}
   \label{eq:paUpOptProb}
 \end{equation}
In \eqref{eq:paUpOptProb} the minimal step size is chosen such that the probability that a triplet is satisfied after the update is greater than or equal to a given probability $P \in 
\left(0.5,1\right)$.  Using \eqref{eq:paUpOptProb}, we derive the following step size:
\begin{equation}
 \delta_j = \frac{d^2_{\mathbf{K}_{j-1}}(a,b) - d^2_{\mathbf{K}_{j-1}}(a,c) + \kappa}{10}
 \label{eq:ProbStep}
\end{equation}
where $\kappa = \log\left(P\right) - \log\left(1-P\right)$. The full derivation is given in Sec. \ref{sec:STE-PA}.  Both derivations reveal that passive-aggressive updates using STE and GNMDS are very similar.  Setting $P = \frac{e}{1+e}$ in \eqref{eq:ProbStep} recovers the GNMDS passive-aggressive step in \eqref{eq:PAStep}, and changing the margin in \eqref{eq:PAStep} recovers different settings of $P$.  

Note that using \eqref{eq:PAStep} as a step size results in a $\mb{K}_j'$ with the intended passive-aggressive property, not necessarily the kernel $\mb{K}_j$ after the projection.
We choose to find a passive-aggressive step size instead of a full update for computational efficiency
Finding a true passive-aggressive step size with respect to $\mb{K}_j$ would require iteratively projecting onto the PSD cone, which is computationally prohibitive in the online setting.
In practice, $d^2_{\mb{K}_j'}$ is a good approximation to $d^2_{\mb{K}_j}$, as their difference is dependent on the magnitude of the (potentially) negative eigenvalue of $\mb{K}_j'$, which tends to be quite small.
  
Even for a proper setting of $\delta_j$, it has been shown that stochastic methods perform best when multiple rounds of updates or passes  are performed on the observed samples \cite{bordes2008sequence,recht2011hogwild,wang2012breaking}.  For our problem setting, this indicates that ERKLE may benefit from revisiting triplets that were previously used to update the kernel.  In our experiments we perform a simple multi-pass scheme where for each new triplet, ERKLE not only steps over the most recently obtained triplet, but also a number of randomly sampled triplets from the set of  previously obtained triplets.  We denote the number of ``passes'' ERKLE performs each time a new triplet is observed as $\beta$.  Algorithm \ref{alg:ERKLE-RG} in Sec. \ref{sec:MP} describes this process in more detail. This simple approach is sufficient for maintaining high accuracy while still ensuring computational efficiency for the online setting.

\begin{figure*}[t]
  \centering
  \begin{subfigure}{0.33\textwidth}
    \centering
    \includegraphics[width=\textwidth]{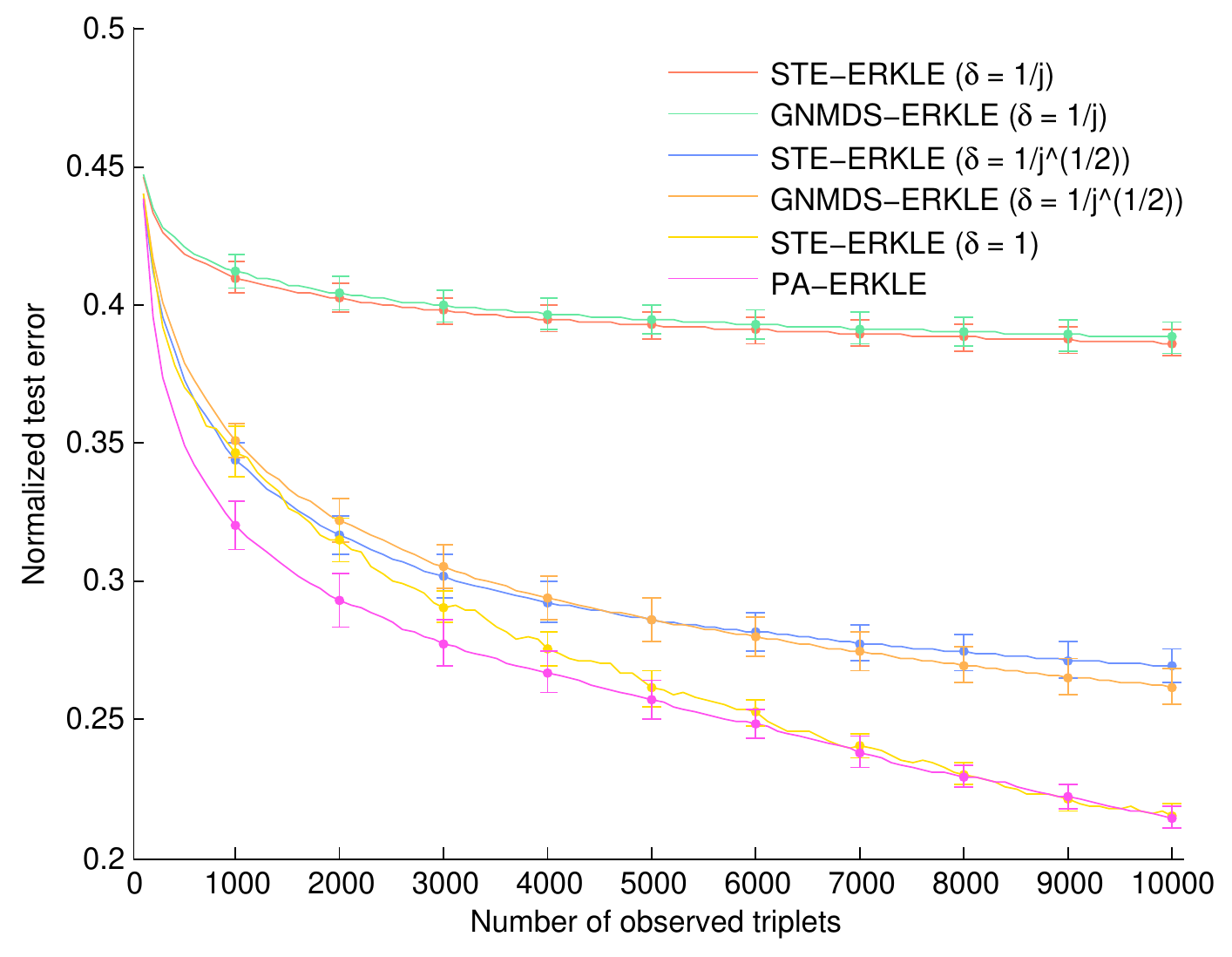}
    \caption{Comparison of ERKLE learning rates $\delta$}
    \label{fig:1a}
  \end{subfigure}%
  \hfill %
  \begin{subfigure}{0.33\textwidth}
    \centering
    \includegraphics[width=\textwidth]{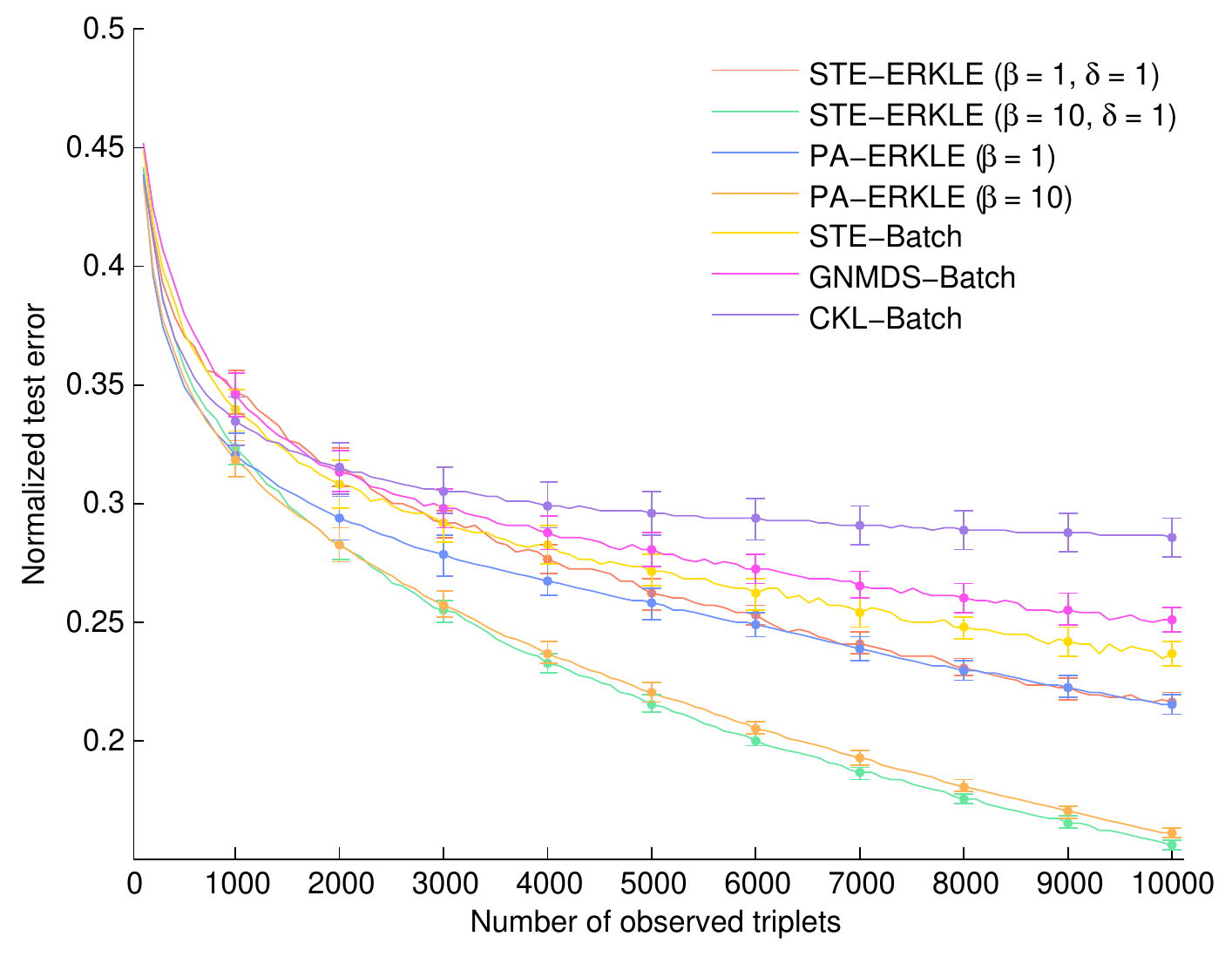}
    \caption{Test Error vs. \# of observed triplets}
    \label{fig:1b}
  \end{subfigure}%
  \hfill
  \begin{subfigure}{0.33\textwidth}
    \centering
    \includegraphics[width=\columnwidth]{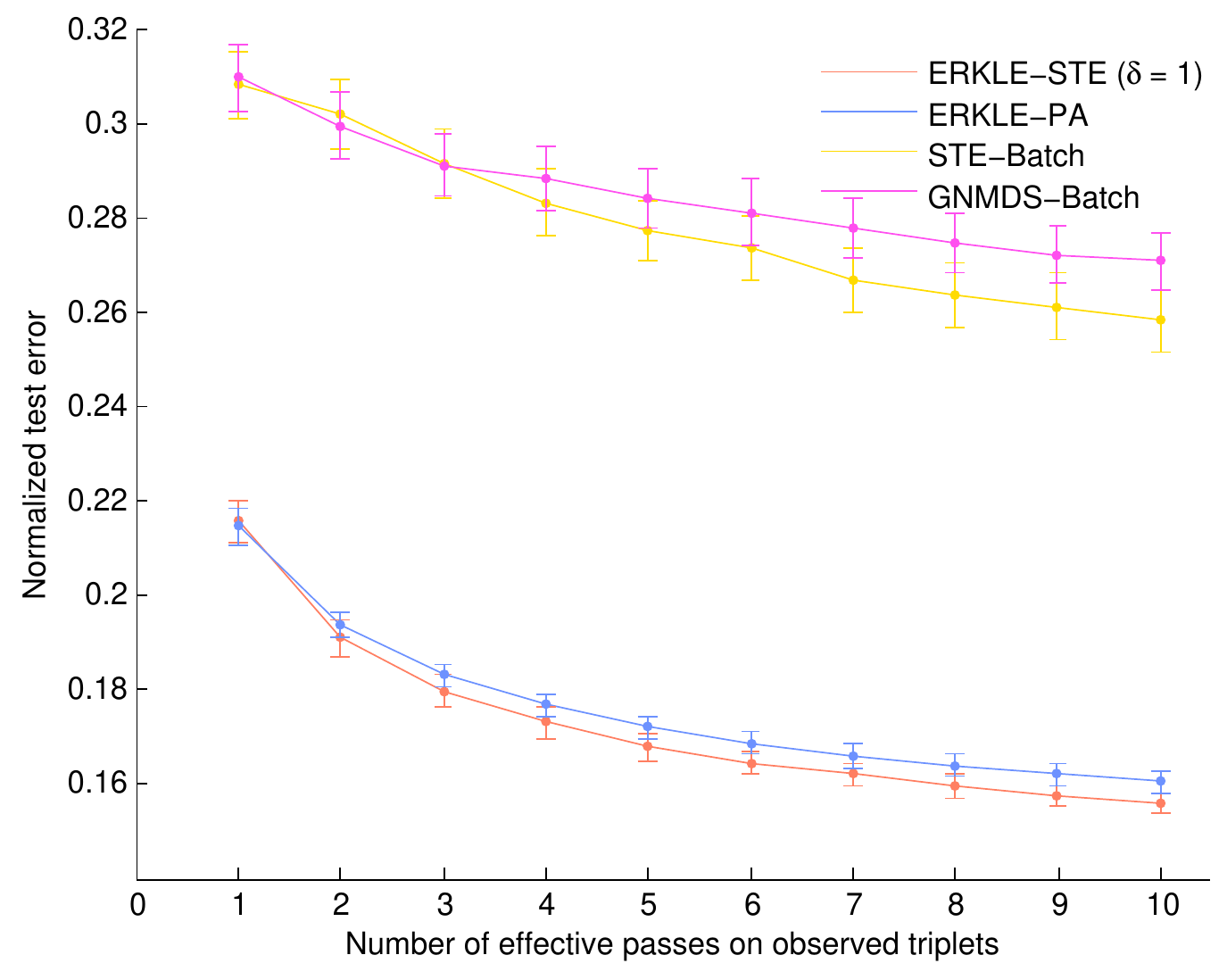}
    \caption{Test Error vs. \# of effective passes}
    \label{fig:1c}
  \end{subfigure}%
    \caption{Results from experiments on the small synthetic data set (10 trials)}
    \label{fig:1}
\end{figure*}

\section{Experiments}
\label{sec:Experiments}
In this section, we evaluate ERKLE by comparing it to batch RCKL methods.  Batch methods are not truly applicable to the online learning setting, but can be applied in what is often called ``mini-batches''.  In the mini-batch learning setting, every time a new batch of $m$ triplets are received, batch RCKL is run on all obtained triplets so far.  Thus, we compare ERKLE to running their batch counterparts in mini-batches.

We evaluate each method on four different data sets, each with its own challenges.  First, we start with a small-scale synthetic experiment to evaluate how the methods perform in an idealized setting.  Second, a large-scale synthetic experiment is run to show how ERKLE and batch compare in terms of practical run time.  Third, a data set of triplets over popular music artists is used to evaluate how the methods perform in a real-world setting with moderate triplet noise.  Finally, ERKLE and batch RCKL are evaluated on a data set of triplets over scene images, which  consist of a small number of triplets, thus focusing on the performance of these methods with very little feedback.  

For these experiments, we wish to see how the learned kernels generalize to held out triplets as triplets are obtained.  This is important in real-world applications where the goal is to accurately model all the relationships among objects, not just the observed ones.  Because of this, one of our main evaluation metrics is \emph{normalized test error}, which is defined as the total number of unsatisfied test triplets by a learned kernel divided by the total number of test triplets.

Unless otherwise noted, the experiments were run with the following specifications.  Each method started with an initial kernel set to identity in order to give no method an advantage (all methods initially satisfy no triplets).  All batch methods were terminated  after a maximum of 1000 iterations or when the change in objective between iterations was less than $10^{-7}$.  We denote the batch methods with the suffix ``-Batch'' (e.g. STE-Batch) and the ERKLE variants with ``-ERKLE'' (e.g. STE-ERKLE).  We denote passive-aggressive ERKLE as PA-ERKLE, and use the step size that satisfies the margin by 1 as in \eqref{eq:PAStep}.  The mini-batch size is 100, and all methods are evaluated every 100 observed triplets.  

We used the batch STE, GNMDS, and CKL (Crowd Kernel Learning \cite{tamuz2011adaptively}) MATLAB implementations specified by \cite{van2012stochastic} in which the \textit{eig} MATLAB function is used to find the all eigenvalues and eigenvectors for projection onto the PSD cone.  ERKLE  was also implemented in MATLAB, where the \textit{eigs} function is used to find a single eigenvalue/eigenvector pair with the smallest eigenvalue for the projections.  The $\tau$ hyperparameter was chosen to be the best performing setting over ten varying options.  The timed experiments were performed on an Intel Core i5-4670K CPU @ 3.4 GHz with 16 GB of RAM and the single thread option enabled.  Each experiment was performed with ten trials, each with different, randomly chosen test, train and validation sets.  The error bars in the graphs represent the 95\% confidence interval.
 

\subsection{Small-Scale Synthetic Data}
Our first experiment is to test each method on an ideal, small-scale, synthetic data set.  We created the synthetic data set by first generating 100 data points ($n = 100$) in $\mathbb{R}^{50}$ from $\mathcal{N}\left(0,1\right)$.  Using the distances between points, we answered all possible relative comparison queries which resulted in 485100 triplets.  10000 triplets were used as the train set and the rest were used as the test set.

%
\begin{figure*}[t]
  \centering
\begin{subfigure}{0.5\textwidth}
    \centering
    \includegraphics[width=\textwidth]{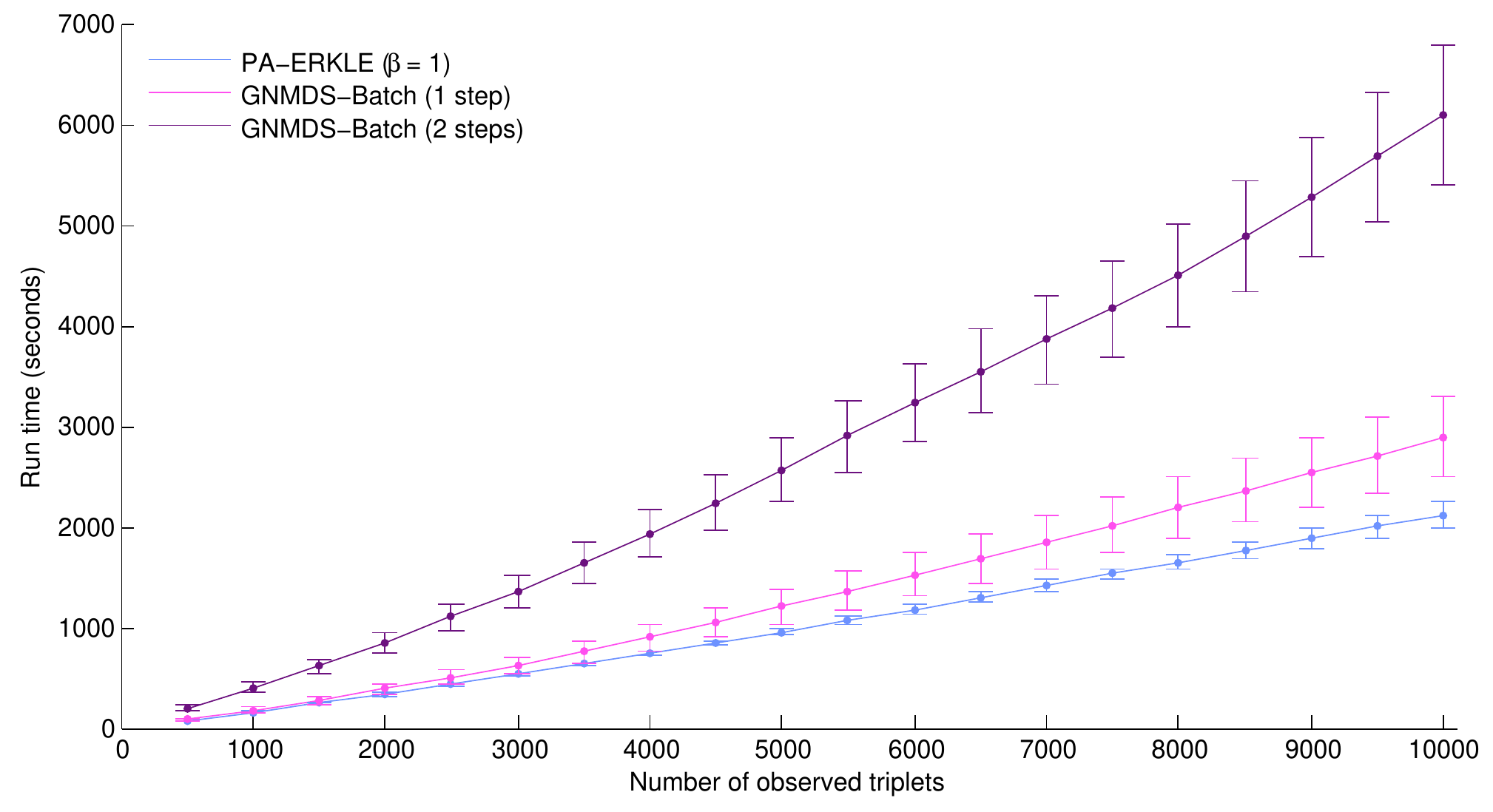}
    \caption{Run time (seconds) vs. number of observed triplets}
    \label{fig:2a}
  \end{subfigure}%
  \hfill %
  \begin{subfigure}{0.5\textwidth}
    \centering
    \includegraphics[width=\textwidth]{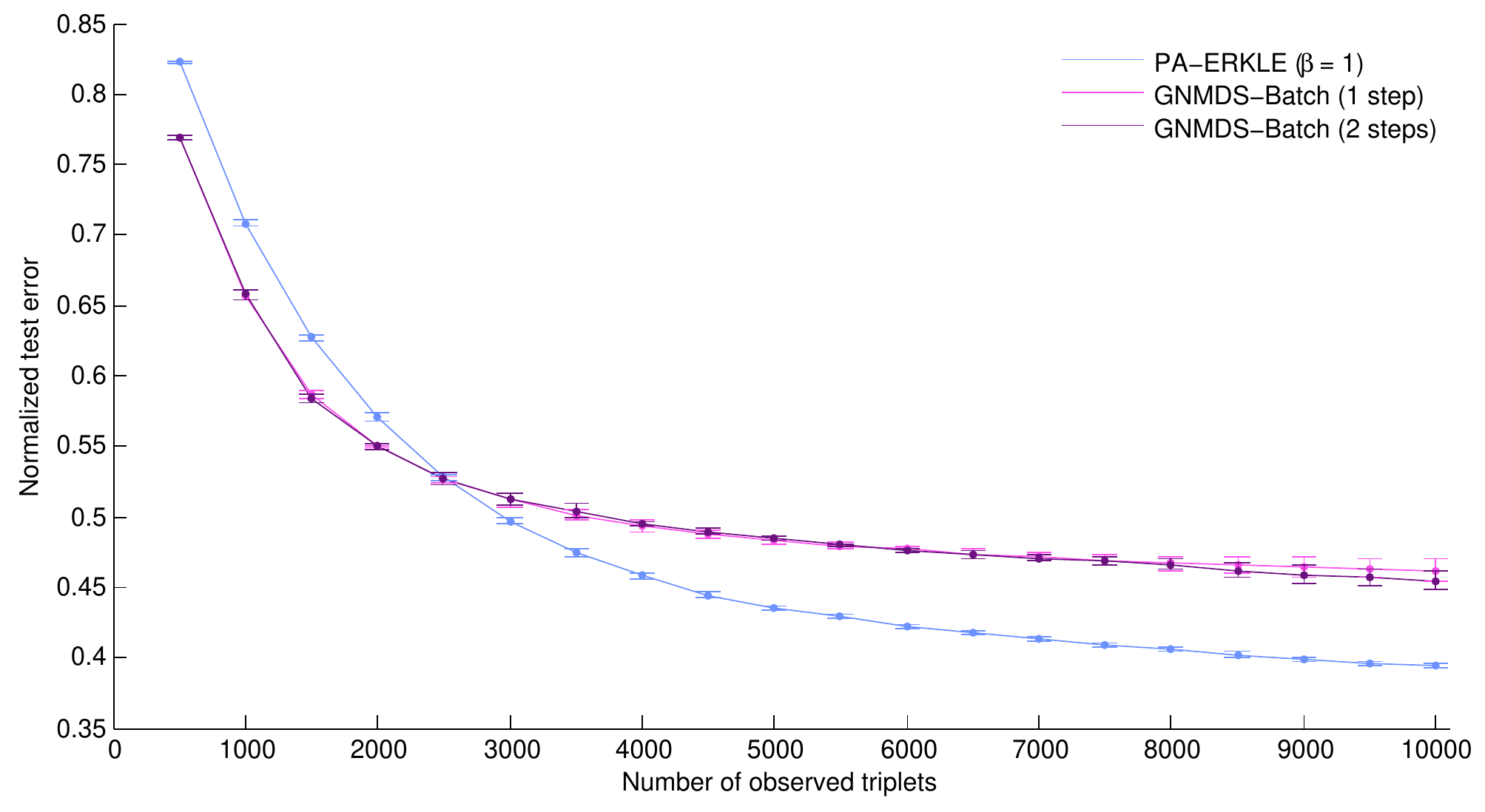}
    \caption{Test error vs. \# of observed triplets}
    \label{fig:2b}
  \end{subfigure}%
    \caption{Results from experiments on the large-scale synthetic data set (5 trials)}
    \label{fig:2}
\end{figure*}
\paragraph{Discussion:} Figure \ref{fig:1a} shows the effect that the learning rate parameter $\delta_j$ has on the performance of ERKLE as more triplets are observed in an online fashion.   For a setting of $1/j$, the learning rate decays too rapidly to improve performance significantly after $j = 3000$.  The learning rate $1/\sqrt{j}$ performs better, but still levels off, quicker than the final two methods.  The last two methods have learning rates that are independent of the number of observed triplets.  STE-ERKLE with a constant learning rate and PA-ERKLE take steps solely based on how well the current solution satisfies the observed triplet, and vastly outperform the alternative learning rates based on number of observations.  This result indicates that reducing the influence of a triplet because it was observed later has an adverse effect on the ability of a learned kernel to generalize to unobserved triplets.

Figure \ref{fig:1b} shows the performance of STE-ERKLE (with $\delta_j$ set to 1), and PA-ERKLE compared to three batch RCKL methods.  The $\tau$ hyperparameter was chosen by selecting the best setting over choices as evaluated on the test set.  With a single pass over the data ($\beta = 1$), both ERKLE methods outperformed all batch methods slightly.  With ten passes over the data, the ERKLE methods outperformed the batch methods by a large margin. In addition, the batch methods level off more quickly than the ERKLE methods, indicating that if more triplets were obtained, the ERKLE methods would further outperform even the batch methods.  We believe that these results show that by minimizing the expected risk directly, ERKLE is able to learn a more general kernel than batch methods that minimize empirical risk.  

Figure \ref{fig:1c} shows the performance of two ERKLE methods and two batch RCKL methods as a function of how many effective ``passes'' each method performed on the data.  For ERKLE, this amounts to the setting of the $\beta$ parameter.  For the batch RCKL methods, this is the number of full gradient steps it takes.  Each method was run over all training triplets with the step size $\delta_j$ validated on the test set for the batch methods.
This effectively measures training cost as a function of passes through the data, and thus, is independent of implementation.  Clearly, if only few passes through the data can be performed, then ERKLE is the better choice.

\subsection{Large-Scale Synthetic Data}
Next, we evaluated how PA-ERKLE compared to batch GNMDS in terms of practical run time on a large scale experiment. For this experiment, we generated 5000 data points in the same manner in which the small-scale synthetic data was generated. 10000 randomly generated triplets were used as the train set and 50000 were used as the test set.  The batch methods were run in mini-batches of 500 triplets due to time constraints.  The hyperparameter $\tau$ and the step size $\delta_i$ were chosen as the settings that best performed on the test set.  This experiment was run over 5 trials, each with a different train and test set.

\paragraph{Discussion:} Figure \ref{fig:2a} shows the cumulative run time of one pass of PA-ERKLE, and 1 and 2 steps of batch GNMDS.
The times shown for the batch methods are for the best chosen $\tau$ and not for the total time it took to find it.
The figure shows that a single pass of PA-ERKLE is often significantly faster than a single gradient step of batch GNMDS.
Two steps of GNMDS takes even longer.
ERKLE can perform online updates much faster due to the efficient projection procedure as well as the ability to skip certain projections by estimating the lower bound.
In this experiment, the mean number of eigenvalue/eigenvector computations over the 5 trials was 724.2 with a standard deviation of 3.7.
Hence PA-ERKLE was able to skip the projection step roughly 93\% of the time.

Figure \ref{fig:2b} depicts the test errors of each method.
Initially, the batch methods perform better, but around 2500 triplets, PA-ERKLE outperforms the batch methods.
This experiment indicates that PA-ERKLE can achieve competitive results with batch methods in a single pass over the data, and produce truly online solutions instead of mini-batch solutions while having faster run time.

\subsection{Music Artist Similarity}
For the last two experiments we performed evaluations on real-world data sets.  First, we performed an experiment using relative comparisons among popular music artists gathered from a web survey.  The \emph{aset400} data set \cite{ellis2002quest} contains 16,385 relative comparisons over 412 artists.  We randomly chose 10000 triplets as the train set, 1000 as the validation set for the $\tau$ parameter, and the rest were used as the test set.  The aset400 data set presents a challenge not present in the synthetic data:  It has a moderate amount of conflicting triplets, thus methods used in the evaluation must deal with noise within the triplets.

\begin{figure*}[t]
  \centering
  \begin{subfigure}{0.33\textwidth}
    \centering
    \includegraphics[width=\textwidth]{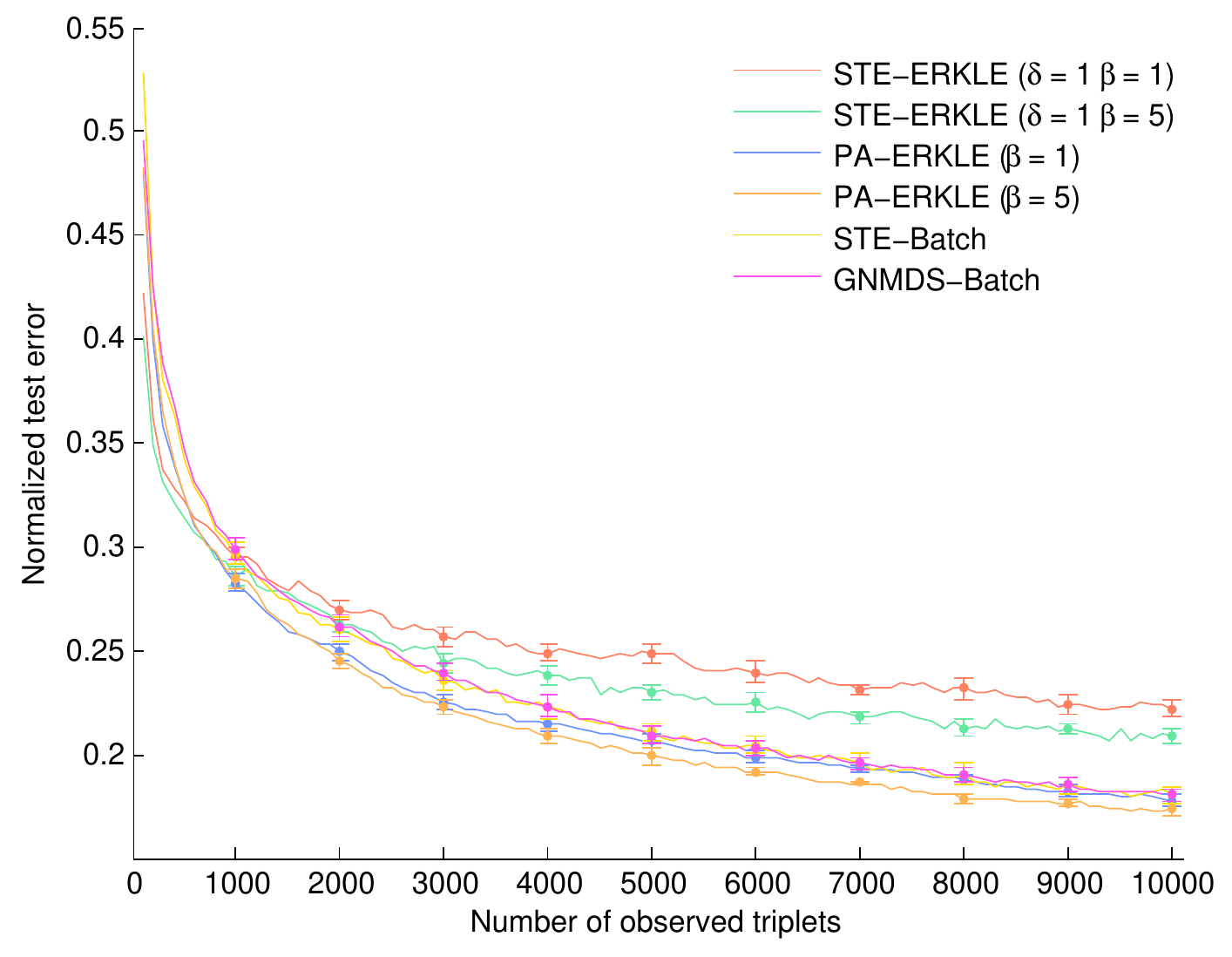}
    \caption{Test error vs. number of observed triplets}
    \label{fig:3a}
  \end{subfigure}%
  \hfill %
  \begin{subfigure}{0.33\textwidth}
    \centering
    \includegraphics[width=\textwidth]{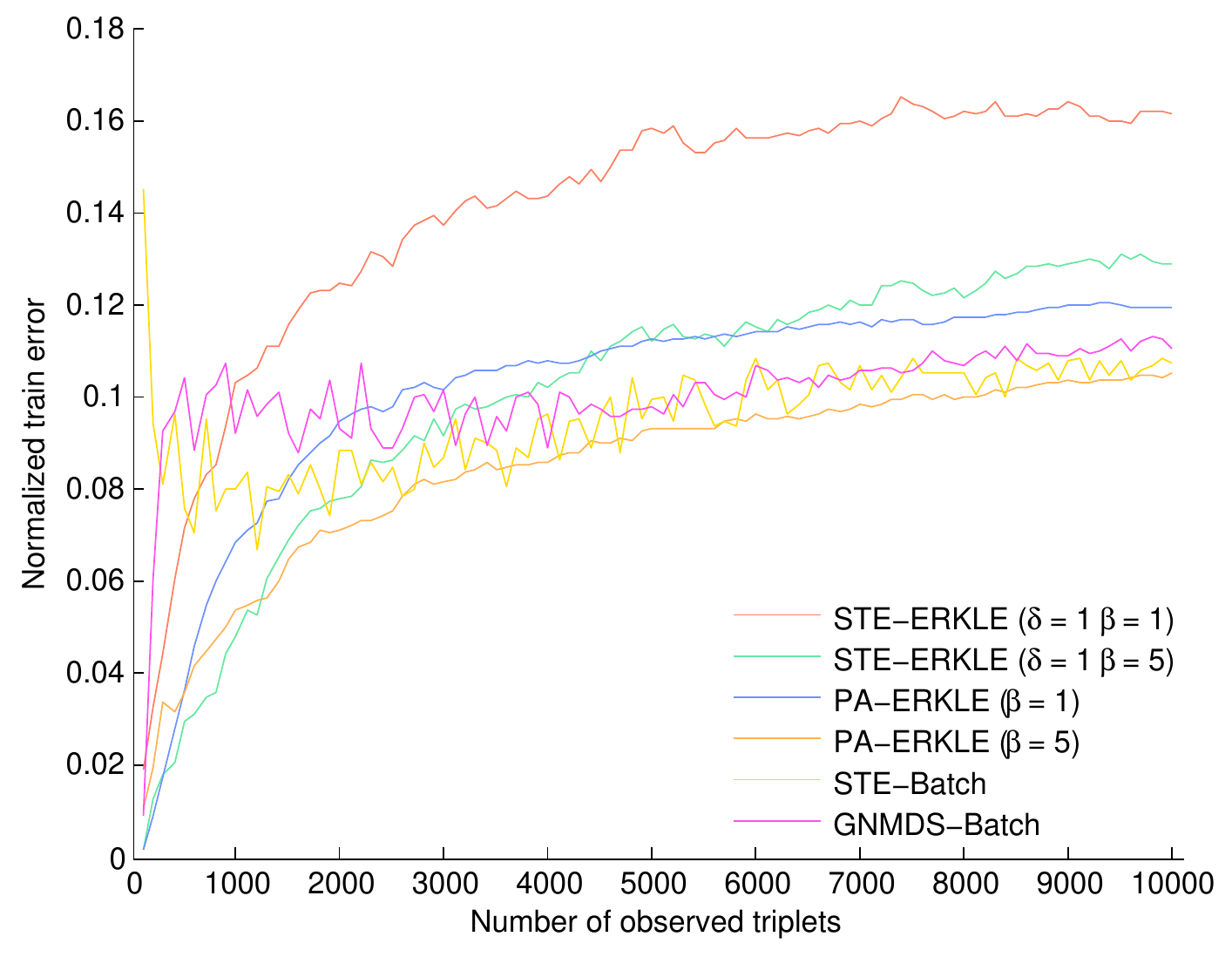}
    \caption{Train error vs. \# of observed triplets}
    \label{fig:3b}
  \end{subfigure}%
  \hfill
  \begin{subfigure}{0.33\textwidth}
    \centering
    \includegraphics[width=\columnwidth]{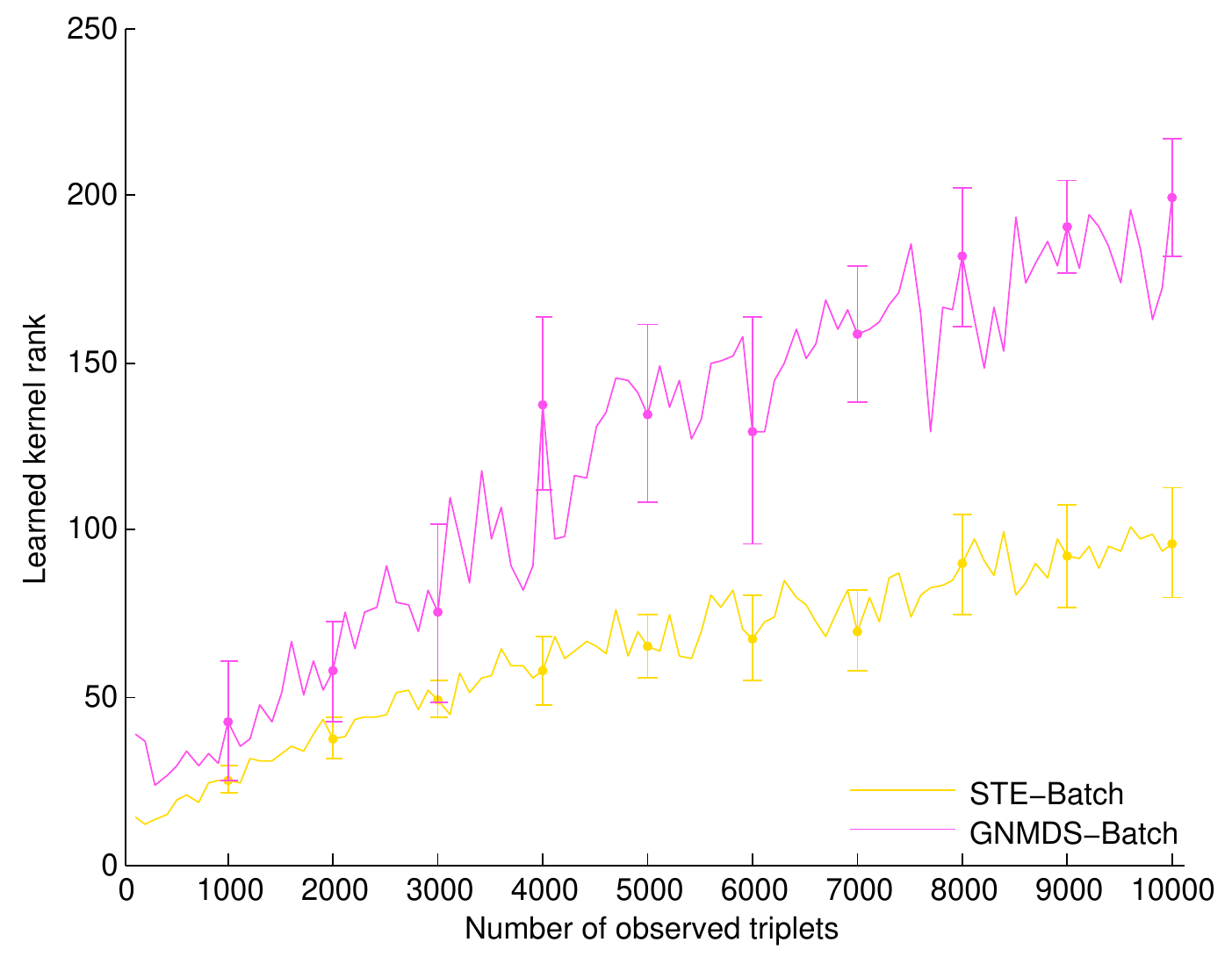}
    \caption{Kernel rank vs. \# of observed triplets}
    \label{fig:3c}
  \end{subfigure}%
    \caption{Results from experiments on the aset400 data set (10 trials)}
    \label{fig:3}
\end{figure*}

\paragraph{Discussion:} Figure \ref{fig:3a} shows how ERKLE and batch RCKL methods generalize to the test set.  STE-ERKLE performs considerably worse than the other methods, most likely due to the noise in the observed triplets.  The probability $p_t^{\mathbf{K}}$ used in STE-ERKLE decays rapidly.  Thus, triplets that are in agreement with previously obtained triplets do not influence the learned kernel greatly.  However, a conflicting triplet will make STE-ERKLE perform a relatively more drastic update.  PA-ERKLE, however, is much more robust to noise due to the minimal step size taken to satisfy a triplet.  Because of this, PA-ERKLE performs as well as the batch methods and often better when multiple passes are taken.  


Figure \ref{fig:3b} shows the training errors of each method.  This figure highlights how well each method fits to the observed triplets.  The STE-ERKLE models are greatly effected by the presence of conflicts in that they do not learn a kernel that fits to a large number of the observed triplets.  PA-ERKLE, on the other hand, is able to fit better to the set of observed triplets, thus resulting in better test accuracy, as well.

As previously discussed, dissimilar from batch methods ERKLE does not use trace regularization.
Experimentally, however, we nevertheless find that our method outperforms batch methods that use trace regularization, in either producing low-rank or high-rank kernels.
To demonstrate this, in Figure \ref{fig:3c} we plot the ranks of the kernels learned by the batch methods.
In our experiments, the range of potential $\tau$ values was set so that the batch methods never chose either the upper or lower bound.
We did this to ensure that the range of regularization options were sufficiently strict or lenient.
We observe that the batch methods generally produce low-rank kernels under a small number of triplets, but as the number of triplets are observed the rank increases.
Our method is able to better generalize without using trace regularization, regardless of the preferred rank, due to the PA updates only satisfying triplets to the necessary extent.


\subsection{Outdoor Scene Similarity}
Our final experiment used triplets over 200 randomly chosen images of scenes from the Outdoor Scene Recognition (OSR) data set \cite{oliva2001modeling}.  Relative comparison queries were posed to 20 people via an online system.  After an initial 1200 randomly chosen queries were answered (every object appeared as the head of a triplet 6 times), 20 ``rounds'' of 200 triplets were adaptively chosen according to the adaptive selection criterion in \cite{tamuz2011adaptively}, resulting in 3600 total triplets.  For each trial of this experiment, 1000 triplets were randomly chosen as the test set, 1000 as the train set, and 600 were used as the validation set for the $\tau$ parameter.  This experiment is especially challenging for two reasons.  First, this is the smallest experiment in terms of triplets, highlighting how the methods perform with little feedback.  In addition, the adaptive selection algorithm chooses relative comparison queries with the highest information gain, meaning, the triplets are intentionally chosen to give disparate information about the relationships among objects.

\paragraph{Discussion:}
Figure~\ref{fig:4a} depicts test errors on each method.
We observe that STE-ERKLE consistently outperforms STE-Batch, and in particular STE-ERKLE performs well under a small number of
triplets relative to all other methods.
PA-ERKLE is comparable or outperforms its batch counterpart in GNMDS-Batch, given enough triplets (at least 500).
However,  PA-ERKLE performs quite well in training error compared to all other methods, indicating that even in such a challenging
scenario, the passive-aggressive update scheme minimally interferes with previously obtained triplets.

\begin{figure*}[t]
  \centering
  \begin{subfigure}{0.5\textwidth}
    \centering
    \includegraphics[width=\textwidth]{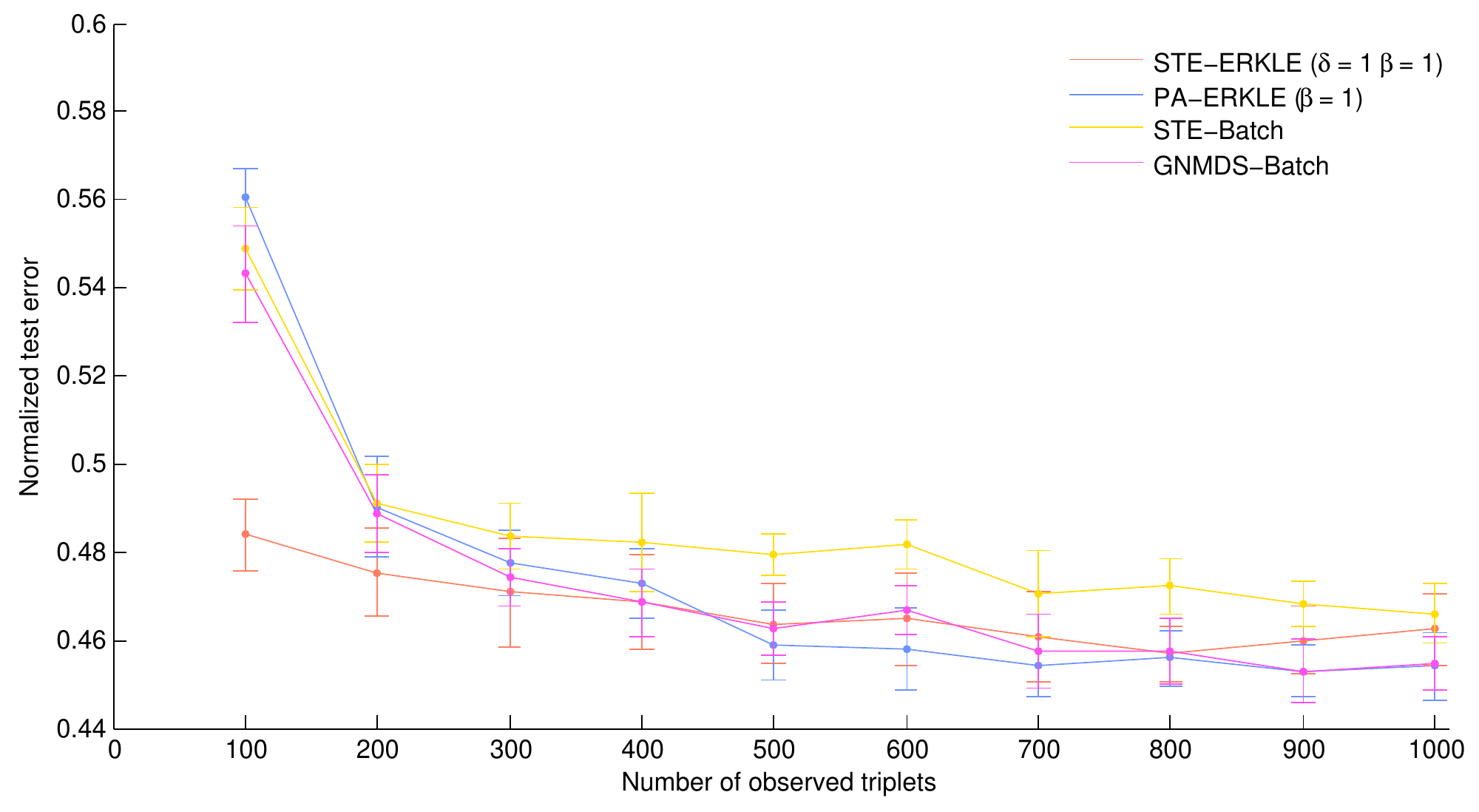}
    \caption{Test error vs. \# of observed triplets}
    \label{fig:4a}
  \end{subfigure}%
  \hfill %
  \begin{subfigure}{0.5\textwidth}
    \centering
    \includegraphics[width=\textwidth]{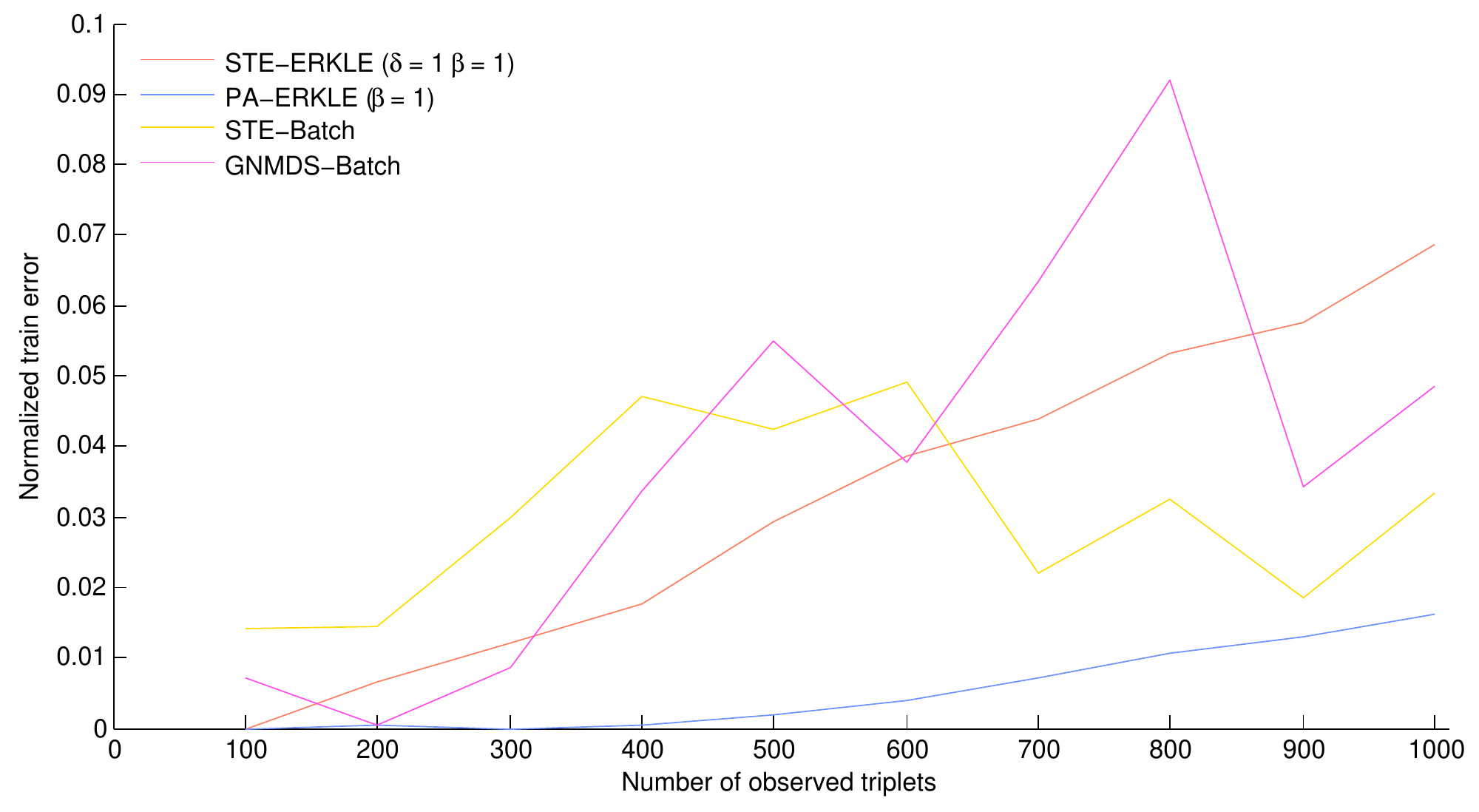}
    \caption{Train error vs. \# observed triplets}
    \label{fig:4b}
  \end{subfigure}%
    \caption{Results from experiments on the OSR data set (10 trials)}
    \label{fig:4}
\end{figure*}

\section{Conclusion and Future Work}
\label{sec:Con}
In this work, we developed a method to learn a PSD kernel matrix from relative comparisons given in an online fashion.
By taking advantage of the sparse and low-rank structure of the online formulation, we show how to take stochastic gradient descent updates of complexity $O(n^2)$.
We show how passive-aggressive online learning benefits our method in terms of generalizing to unseen triplets, and in conjunction with the stochastic gradient structure, enables us to perform a small number of necessary PSD projections in practice.
Experimentally, we show on synthetic and real-world data that our method learns kernels that generalize as well and often better to held out relative comparisons than batch methods, while demonstrating improved run-time performance.

For future work, we wish to improve online RCKL in three ways.
First, will explore the use of online trace regularization.
If trace regularization is naively applied to the stochastic gradient in \eqref{eq:stoUp}, the update becomes full-rank and our efficient projection procedure cannot be used.
However, an efficient update scheme should be possible if the kernel itself is low-rank.
We will investigate novel methods for appropriately weighting the trace in an online manner, so that we are consistent with the parameter-free property of PA-ERKLE.
Second, PA-ERKLE performed well in our experiments with moderate triplet noise, however, it could be beneficial to explicitly handle conflicting triplets when they are observed.
This can be done out of model using a denoising method~\cite{mcfee2011learning}, or in model using a threshold on the passive-aggressive learning rate.
Finally, one of the main benefits of having an online learning algorithm is the natural application of active learning methods.
Prior work has proposed an adaptive selection scheme which operates in mini-batches~\cite{tamuz2011adaptively}; however, such a scheme is too expensive to be applied online.
We will investigate novel adaptive triplet selection methods which are both efficient and informative.

\appendix
\section{Derivation of STE Passive-Aggressive Step Size}
\label{sec:STE-PA}
To derive the STE version of the passive-aggressive step size we wish to solve the following optimization (4.19):
 \begin{equation*}
   \begin{array}{rl}
     \displaystyle \min_{\delta_j} & \displaystyle \delta_j^2\\ [5 pt]
     \mathrm{s.t.} & p_{t_j}^{\mathbf{K}_j'} \geq P, \delta_j \geq 0
   \end{array}
 \end{equation*}
\noindent As with the GNMDS derivation with the assumption that the triplet is not satisfied by a probability greater than or equal to $P$, only a positive value of $\delta_j$ can satisfy the first constraint, making the positive constraint on $\delta_j$ redundant.  In addition, the smallest $\delta_j$ that satisfies the remaining constraint is the one that makes the left hand side exactly zero.  As a result, the inequality constraint can be handled as equality.  Next, we take the Lagrangian:
%
\begin{equation*}
\begin{array}{rl}
\displaystyle \delta_j^2 & + \alpha\left(\log\left(P\right) - \log\left(1-P\right)\right)\\
 & + \alpha\left(d_{\mathbf{K}_{j-1}}(a,b) - d_{\mathbf{K}_{j-1}}(a,c)\right) -10\delta_j\alpha
\end{array}
\end{equation*}
\noindent Taking the partial derivative of the Lagrangian with respect to $\delta_j$, setting it to 0, and solving for $\delta_j$ results in $\delta_j = 5\alpha$.  Substituting this back into the Lagrangian makes it:
%
\begin{equation*}
\begin{array}{rl}
\displaystyle -25\alpha^2 & + \alpha\left(\log\left(P\right) - \log\left(1-P\right)\right) \\
& + \alpha\left(d_{\mathbf{K}_{j-1}}(a,b) - d_{\mathbf{K}_{j-1}}(a,c)\right)
\end{array}
\end{equation*}
\noindent Taking the partial derivative of the Lagrangian with respect to $\alpha$, setting it to 0, and solving for $\alpha$ results in:
$$
\alpha =  \frac{\log\left(P\right) - \log\left(1-P\right) + d_{\mathbf{K}_{j-1}}(a,b) - d_{\mathbf{K}_{j-1}}(a,c)}{50}
$$
Substituting this into the solution for $\delta_j$ gives us:
$$
\delta_j =  \frac{\log\left(P\right) - \log\left(1-P\right) + d_{\mathbf{K}_{j-1}}(a,b) - d_{\mathbf{K}_{j-1}}(a,c)}{10}
$$
This is what is given in (4.20).

\section{ERKLE with Multiple Passes}
\label{sec:MP}
\begin{algorithm}[h]
  \center \caption{ERKLE with Multiple Passes}
    \label{alg:ERKLE-RG}
    \begin{algorithmic}[1]
    \Require $\beta$ : \# of triplets stepped over
    \State $\mathbf{K}_0 \gets \mathbf{I}$
    \For{$j=1,2,...$}
       \State $\mathbf{K}_j' \gets \mathbf{K}_{j-1} - \delta_j\nabla l\left(\mathbf{K}_{j-1},t_j\right)$
       \State $\mathbf{K}_j \gets \Pi_{S_+}^1\left(\mathbf{K}_j'\right)$
       \If{$j > 2\beta$}
          \For{$k=1,2,...,\beta-1$}         
             \State Randomly select $t'$ from $\left\{t_1,t_2,...,t_j\right\}$
             \State $\mathbf{K}_j' \gets \mathbf{K}_{j} - \delta_{j+k}\nabla l\left(\mathbf{K}_{j-1},t'\right)$
             \State $\mathbf{K}_j \gets \Pi_{S_+}^1\left(\mathbf{K}_j'\right)$
          \EndFor
       \EndIf
    \EndFor
   \end{algorithmic}
 \end{algorithm}

\noindent Algorithm 1 is much like the original ERKLE algorithm.  Here, after a sufficient number of triplets have been obtained (in our experiments, we chose $2\beta$), $\beta - 1$ triplets are selected every iteration from all previously observed triplets (for a total of $\beta$ updates per iteration). These triplets are stepped over as done in the original ERKLE algorithm.  For our random selection used in our experiments, we simply selected uniformly at random with replacement from the obtained triplets.  More sophisticated random selection procedures may be used in order ensure triplets obtained initially do not get selected drastically more times than those obtained later.  For instance, when a triplet gets chosen on line 7, one could reduce the probability of that triplet being chosen subsequently.

\bibliography{ERKLE_arxiv}
\bibliographystyle{plain}
\end{document}